\pdfoutput=1
\documentclass[letterpaper,journal]{IEEEtran}
\usepackage{cite}
\usepackage{amsmath,amssymb,amsfonts}
\usepackage{array}
\usepackage{booktabs}
\usepackage{diagbox}
\usepackage{multirow}
\usepackage{graphicx}
\usepackage{float}
\usepackage{threeparttable}
\usepackage{color,xcolor}
\usepackage{url}
\usepackage{microtype}
\usepackage{makecell}
\usepackage[ruled,vlined,linesnumbered]{algorithm2e}
\SetAlgoNlRelativeSize{0.5}
\usepackage[caption=false,font=footnotesize]{subfig}

\usepackage[colorlinks,
linkcolor=blue,
anchorcolor=blue,
citecolor=blue]{hyperref}
\usepackage{textcomp}
\usepackage{stfloats}
\usepackage{balance}
\newtheorem{definition}{Definition}
\newtheorem{proof}{Proof}

\newtheorem{remark}{Remark}
\newtheorem{property}{Property}
\newtheorem{proposition}{Proposition}

\def\BibTeX{{\rm B\kern-.05em{\sc i\kern-.025em b}\kern-.08em
    T\kern-.1667em\lower.7ex\hbox{E}\kern-.125emX}}

\begin{document}

\title{A Boundary-Aware Non-parametric Granular-Ball Classifier Based on Minimum Description Length}

\author{Zeqiang~Xian, Caihui~Liu\IEEEauthorrefmark{1}, Yong~Zhang, Wenjing~Qiu, Duoqian~Miao, and Witold~Pedrycz~\IEEEmembership{Life Fellow,~IEEE}%
	\thanks{Zeqiang Xian, Caihui Liu, Yong Zhang, and Wenjing Qiu are with the Department of Mathematics and Computer Science, Gannan Normal University, Ganzhou 341000, Jiangxi, China, and also with the Key Laboratory of Data Science and Artificial Intelligence of Jiangxi Education Institutes, Gannan Normal University, Ganzhou 341000, Jiangxi, China (e-mail: \href{xianzeqiang@gnnu.edu.cn}{xianzeqiang@gnnu.edu.cn}; \href{liucaihui@gnnu.edu.cn}{liucaihui@gnnu.edu.cn}; \href{zhang_yong@gnnu.edu.cn}{zhang\_yong@gnnu.edu.cn}; \href{wenjingqiu@gnnu.edu.cn}{wenjingqiu@gnnu.edu.cn}).}%
	\thanks{Duoqian Miao is with the Department of Computer Science and Technology, Tongji University, 201804 Shanghai, China (e-mail: \href{dqmiao@tongji.edu.cn}{dqmiao@tongji.edu.cn}).}%
	\thanks{Witold Pedrycz is with the Department of Electrical and Computer Engineering, University of Alberta, Edmonton, AB T6R 2V4, Canada (e-mail: \href{wpedrycz@ualberta.ca}{wpedrycz@ualberta.ca}).}%
	\thanks{* Caihui Liu is the corresponding author.}%
}

\markboth{}%
{Xian \MakeLowercase{\textit{et al.}}: A Boundary-Aware Non-parametric Granular-Ball Classifier Based on Minimum Description Length}
\maketitle

\begin{abstract}
	Existing granular-ball classification methods are often driven by handcrafted quality measures, neighborhood rules, or heuristic splitting and stopping criteria, which may reduce the transparency of local construction decisions and hinder explicit modeling of boundary-sensitive regions. To address this issue, this paper proposes a Minimum Description Length based Granular-Ball Classifier (MDL-GBC), a boundary-aware non-parametric and interpretable granular-ball classifier. MDL-GBC formulates class-conditional granular-ball construction as a local model selection problem under the Minimum Description Length principle. For each class, samples from the target class provide positive class evidence, while samples from the remaining classes provide negative boundary evidence. For each current granular ball, three candidate explanations are compared under a unified description-length criterion: a single-ball model, a two-ball model, and a core-boundary model. The selected model determines whether the ball is retained, geometrically split, or refined into core and boundary-sensitive child balls, thereby making local construction decisions consistent with the MDL-based classification mechanism. During prediction, a class-level mixture coding rule aggregates stable granular balls of the same class and assigns the test sample by comparing class-wise coding costs. Experiments on 18 benchmark datasets show that MDL-GBC achieves competitive classification performance against classical classifiers and representative granular-ball-based methods, obtaining the best average Accuracy, Macro-F1, and average rank. These results indicate that MDL-GBC provides an effective and interpretable alternative to conventional heuristic granular-ball classification strategies.
\end{abstract}

\begin{IEEEkeywords}
	Granular-ball computing, minimum description length, boundary-aware classification, interpretable classification, non-parametric algorithm
\end{IEEEkeywords}

\section{Introduction}
\label{sec:introduction}

\IEEEPARstart{C}{lassification} is a fundamental task in machine learning and data mining.
Conventional classifiers often learn decision rules from sample-level relations, where individual samples are treated as the basic computational units~\cite{Xia2024_GBCSurvey}.
Although such fine-grained representations are flexible and widely used, they may be sensitive to local perturbations, noisy labels, and ambiguous samples near class boundaries.
In many practical scenarios, the reliability of a classification decision depends not only on isolated samples, but also on the local regions in which these samples are located.
Therefore, an effective classifier should be able to construct data-adaptive local regions that reflect distributional structure, capture boundary information, and remain interpretable in terms of decision formation.

Granular-ball computing provides a region-level learning paradigm for this purpose~\cite{Xia2024_GBCSurvey}.
Instead of using individual samples as basic units, it represents a group of similar samples by a granular ball, which is usually described by its center, radius, and label.
Xia et al.~\cite{Xia2019_GBClassifiers} introduced granular-ball classifiers by replacing sample-level inputs with granular balls, showing that this representation can reduce redundant point-wise computation and improve robustness.
Since each granular ball has an explicit geometric meaning, the learned model can also be interpreted through local information granules rather than isolated samples.
Following this idea, granular-ball learning has been extended to support vector machines~\cite{lang2025granular,xia2024gbsvm}, rough-set-based knowledge representation~\cite{xia2023gbrs}, fuzzy-set-based learning~\cite{Xia2024_GBFuzzySet}, clustering~\cite{Xie2023_GBSC,Xia2025_GBCT}, feature selection~\cite{zhang2025attribute,Zhang2025_FAGBRS}, and anomaly or outlier detection~\cite{Su2024_GBFRD,Su2025_GBDO}.
These studies indicate that granular balls can serve as efficient and interpretable carriers of multi-granularity knowledge.

The effectiveness of granular-ball learning, however, depends strongly on how granular balls are constructed.
Early granular-ball generation methods usually adopt recursive splitting strategies, where a ball is divided until a predefined quality condition, such as class purity, is satisfied~\cite{Xia2019_GBClassifiers}.
Subsequent studies have improved this process from different perspectives.
For example, Xia et al.~\cite{Xia2024_EfficientAdaptiveGBG} improved generation efficiency by replacing repeated clustering operations with a more efficient division mechanism.
Xie et al.~\cite{Xie2024_GBGPP} reduced the instability caused by center selection through data-driven center initialization and local outlier handling.
Liu et al.~\cite{Liu2025_LDGBG} introduced local density information to improve distributional consistency.
Other studies have considered heterogeneous class relationships~\cite{Liao2025_ADPGBG}, radius adjustment and overlap reduction~\cite{Pan2025_GranularityTuning}, and justifiable granularity~\cite{Jia2025_POJGGeneration}.
These methods have improved the efficiency, stability, and quality of generated granular balls.

Despite these advances, most existing generation strategies still rely on task-specific or separately designed criteria to determine local construction operations.
For instance, one rule may be used to decide whether a ball should be split, another criterion may be introduced to adjust its radius, and additional mechanisms may be required to reduce overlaps or handle uncertain samples.
As a result, ball retention, geometric splitting, boundary handling, and granularity control are often governed by different heuristic rules rather than by a unified decision principle.
This makes it difficult to determine whether a local construction decision is sufficiently supported by the data itself or mainly triggered by a predefined rule.
For classification tasks, this issue is particularly important because the learned granular balls should not only be compact within each class, but also reliable with respect to nearby samples from other classes.

Boundary information further highlights this limitation.
A granular ball may be compact with respect to samples of its own class, while still being close to samples from other classes.
In such cases, within-class compactness alone is insufficient to determine whether the ball provides a reliable class-conditional representation.
Several uncertainty-aware granular-ball methods have addressed related issues from different perspectives.
Granular-ball fuzzy set models introduce fuzzy membership into ball-based representations~\cite{Xia2024_GBFuzzySet}.
Three-way granular-ball classifiers allow uncertain samples to be handled separately instead of forcing all instances into certain classes~\cite{Yang2024_3WCGBNRS,Yang2026_ILGBSS}.
Shadowed granular-ball generation characterizes local uncertainty through shadowed-set-based three-region granulation~\cite{Zhang2026_SGB,Yang2025_SGB}.
These studies show that uncertainty and boundary information are important for granular-ball classification.
Nevertheless, boundary information is often introduced as an additional uncertainty-handling mechanism, rather than being directly embedded into the criterion that determines how granular balls are constructed.
Therefore, a classification-oriented granular-ball method should jointly consider within-class compactness, cross-class boundary exposure, and model complexity during the construction process.

The Minimum Description Length (MDL) principle provides a principled perspective for addressing this problem~\cite{barron1998minimum,grunwald2007minimum}.
Instead of relying on a fixed threshold to decide whether a local region should be refined, MDL selects the model that gives the most economical description of the data.
A more complex representation is accepted only when its improvement in data encoding is sufficient to compensate for the additional coding cost.
This trade-off is naturally consistent with granular-ball learning.
Simple and well-separated regions should remain coarse, whereas complex or boundary-sensitive regions should be refined only when such refinement is justified by the data.
Thus, MDL provides a suitable basis for data-adaptive granularity control and interpretable local construction decisions.

The relevance of MDL to granular-ball construction has also been indicated by recent work on MDL-based granular-ball generation for clustering~\cite{Xian2026_MDLGBG}.
However, classification differs from clustering in that class labels and cross-class relationships are directly involved in representation learning.
A classification-oriented granular-ball method should therefore not only describe the distribution of samples within each class, but also account for negative evidence from other classes near decision boundaries.
Accordingly, the construction criterion should evaluate each granular ball not only by its within-class representation, but also by its reliability under nearby cross-class interactions.

Motivated by the above analysis, this paper proposes MDL-GBC, a boundary-aware non-parametric granular-ball classifier based on the Minimum Description Length principle.
The proposed method constructs class-conditional granular balls by jointly considering positive evidence from the target class and negative boundary evidence from the remaining classes.
Under the MDL principle, class-conditional granular-ball construction is formulated as a local model selection problem, where ball retention, geometric splitting, and boundary-sensitive refinement are compared within a unified description-length criterion.
After training, each class is represented by a set of stable granular balls, and prediction is performed by comparing class-level mixture coding costs.
In this way, MDL-GBC provides a classification-oriented granular-ball learning mechanism that reduces dependence on manually specified structural thresholds while maintaining consistency between representation construction and prediction from a coding-theoretic perspective.

The main contributions of this paper are summarized as follows:
\begin{itemize}
	\item A boundary-aware non-parametric granular-ball classifier is proposed under the MDL principle, providing a unified coding-theoretic framework for class-conditional granular-ball construction and prediction.
	
	\item A class-conditional construction criterion is developed by incorporating both positive class evidence and negative boundary evidence, enabling the learned balls to reflect within-class compactness and cross-class boundary exposure.
	
	\item Granular-ball construction is formulated as a local model selection problem, in which ball retention, geometric splitting, and boundary-sensitive refinement are selected according to a unified description-length criterion.
	
	\item A class-level mixture coding rule is introduced for prediction, allowing stable granular balls of the same class to jointly contribute to the final classification decision.
\end{itemize}

The remainder of this paper is organized as follows.
Section~\ref{sec:methodology} presents the proposed MDL-GBC classifier in detail, including the problem formulation, class-conditional granular-ball representation, negative boundary evidence, MDL-based local model competition, and class-level mixture coding prediction rule.
Section~\ref{sec:complexity_analysis} analyzes the computational complexity of the proposed method.
Section~\ref{sec:experiments} describes the experimental settings and reports the classification results on 18 benchmark datasets.
Section~\ref{sec:discussion_conclusion} provides discussion and concluding remarks, including the empirical characteristics, advantages, limitations, and future directions of MDL-GBC.

\section{Methodology}
\label{sec:methodology}

This section presents MDL-GBC, a boundary-aware non-parametric granular-ball classifier based on the Minimum Description Length (MDL) principle.
The method follows the general idea of MDL-based local model competition used in MDL-GBG~\cite{Xian2026_MDLGBG}, but reformulates it for supervised classification.
In MDL-GBC, granular balls are constructed in a class-conditional manner.
For each target class, samples from this class provide positive evidence, whereas samples from the remaining classes provide negative boundary evidence.
Thus, local construction is no longer guided only by the compactness of the current ball, but also by its exposure to samples from other classes.

Under this setting, each current class-conditional ball is evaluated by three candidate local explanations: a single-ball model, a two-ball model, and a core-boundary model.
The selected explanation determines whether the current ball is retained as a stable ball, geometrically split into two child balls, or refined into a core ball and a boundary-sensitive child ball.
Different from the core-ball-plus-residual model used in MDL-GBG for clustering, the boundary-sensitive child ball in MDL-GBC is not treated as a residual set.
It remains part of the class-conditional representation and continues to participate in recursive MDL evaluation.
After construction, each class is represented by a set of stable granular balls, and prediction is performed by comparing class-level mixture coding costs.

\subsection{Problem Formulation and Notation}
\label{subsec:problem_formulation}

Let
$
	D
	=
	\{(x_i,y_i)\}_{i=1}^{n},
	x_i\in\mathbb{R}^{d},
	y_i\in Y=\{1,2,\ldots,C\},
	\label{eq:training_set}
$
be a labeled training set, where $n$ denotes the number of samples, $d$ denotes the feature dimension, and $C$ denotes the number of classes.
Before granular-ball construction, all features are linearly normalized into $[0,1]$.
The same normalization transformation is then applied to test samples.

For class $c\in Y$, the positive and negative sample sets are defined as
\begin{equation}
	X_c^+
	=
	\{x_i\mid y_i=c\},
	\qquad
	X_c^-
	=
	\{x_i\mid y_i\neq c\}.
	\label{eq:positive_negative_sets}
\end{equation}
Let $n_c=|X_c^+|$ and $n_c^-=|X_c^-|$.
The smoothed empirical prior of class $c$ is defined as
\begin{equation}
	\pi_c
	=
	\frac{n_c+1}{n+C}.
	\label{eq:class_prior}
\end{equation}
This prior incorporates class-frequency information into the coding objective and avoids zero-probability estimates.

\subsection{Class-Conditional Granular Ball}
\label{subsec:class_conditional_granular_ball}

\begin{definition}[Class-conditional granular ball]
	For class $c$, a class-conditional granular ball is defined as
	\begin{equation}
		B=(X_B,c),
		\qquad
		X_B\subseteq X_c^+,
		\label{eq:class_conditional_ball}
	\end{equation}
	where $X_B$ is a subset of positive samples from class $c$.
\end{definition}

For a ball $B$, let $n_B=|X_B|$.
Its empirical center is given by
\begin{equation}
	\mu_B
	=
	\frac{1}{n_B}
	\sum_{x\in X_B}x .
	\label{eq:center}
\end{equation}
The empirical radius is defined as
\begin{equation}
	r_B
	=
	\max_{x\in X_B}
	\|x-\mu_B\|_2 .
	\label{eq:radius}
\end{equation}
For the $j$-th feature, the empirical variance within $B$ is
\begin{equation}
	v_{Bj}
	=
	\frac{1}{n_B}
	\sum_{x\in X_B}x_j^2
	-
	\mu_{Bj}^{2},
	\qquad
	j=1,2,\ldots,d .
	\label{eq:empirical_variance}
\end{equation}

The first- and second-order sufficient statistics of $B$ are
\begin{equation}
	s_B
	=
	\sum_{x\in X_B}x,
	\qquad
	a_B
	=
	\sum_{x\in X_B}x\odot x,
	\label{eq:sufficient_statistics}
\end{equation}
where $\odot$ denotes the Hadamard product.
These statistics allow candidate local decompositions to be evaluated without repeatedly recomputing empirical moments.

To keep logarithmic and ratio terms well-defined, the following effective radius and variance are used:
\begin{equation}
	\tilde{r}_B
	=
	\max\{r_B,\epsilon_r\},
	\qquad
	\tilde{v}_{Bj}
	=
	\max\{v_{Bj},\epsilon_v\},
	\label{eq:effective_radius_variance}
\end{equation}
where $\epsilon_r$ and $\epsilon_v$ are small numerical constants used for finite-precision stability.

For prediction, the Gaussian coding energy is evaluated with training-data-dependent variance floors.
Let $r_0$ denote a radius floor estimated from the stable balls, and let $\eta_j$ denote a global variance floor for the $j$-th normalized feature.
The effective quantities used during prediction are
\begin{equation}
	\tilde r_B^{\mathrm{pred}}
	=
	\max\{r_B,r_0,\epsilon_r\},
	\label{eq:predictive_effective_radius}
\end{equation}
and
\begin{equation}
	\tilde v_{Bj}^{\mathrm{pred}}
	=
	\max
	\left\{
	v_{Bj},
	\eta_j,
	\frac{(\tilde r_B^{\mathrm{pred}})^2}{d},
	\epsilon_v
	\right\}.
	\label{eq:predictive_effective_variance}
\end{equation}

\subsection{Negative Boundary Evidence}
\label{subsec:negative_boundary_evidence}

For a class-conditional ball, compactness alone is insufficient for classification, because a compact positive region may still lie close to samples from other classes.
MDL-GBC therefore introduces negative boundary evidence through nearest-negative distances.
For class $c$, the nearest-negative distance of a sample $x$ is defined as
\begin{equation}
	\delta_c^{-}(x)
	=
	\min_{z\in X_c^-}
	\|x-z\|_2 .
	\label{eq:nearest_negative_distance}
\end{equation}
The nearest-negative distance of the ball center is
\begin{equation}
	\delta_c^{-}(\mu_B)
	=
	\min_{z\in X_c^-}
	\|\mu_B-z\|_2 .
	\label{eq:center_nearest_negative_distance}
\end{equation}
When $X_c^-$ is empty, the boundary-related terms are set to zero.

For $x\in X_B$, its relative boundary risk with respect to $B$ and class $c$ is defined as
\begin{equation}
	\rho(x;B,c)
	=
	\frac{1}{
		1+
		\left(
		\dfrac{\delta_c^{-}(x)}{\tilde{r}_B}
		\right)^2
	}.
	\label{eq:relative_boundary_risk}
\end{equation}

\begin{property}[Monotonicity of relative boundary risk]
	\label{prop:boundary_risk_monotonicity}
	For a fixed ball $B$, the relative boundary risk $\rho(x;B,c)$ is monotonically decreasing with respect to the normalized nearest-negative distance $\delta_c^{-}(x)/\tilde r_B$.
\end{property}

\begin{proof}
	Let
	\begin{equation}
		u
		=
		\frac{\delta_c^{-}(x)}{\tilde r_B}.
	\end{equation}
	Then $\rho=(1+u^2)^{-1}$, and
	\begin{equation}
		\frac{\partial \rho}{\partial u}
		=
		-\frac{2u}{(1+u^2)^2}
		\leq 0,
		\qquad
		u\geq 0.
		\label{eq:boundary_risk_monotonicity}
	\end{equation}
	Therefore, $\rho(x;B,c)$ decreases as the normalized nearest-negative distance increases.
\end{proof}

The average boundary risk of $B$ is
\begin{equation}
	\bar{\rho}(B,c)
	=
	\frac{1}{n_B}
	\sum_{x\in X_B}
	\rho(x;B,c).
	\label{eq:average_boundary_risk}
\end{equation}

\subsection{Description Length of a Class-Conditional Ball}
\label{subsec:description_length_ball}

For a class-conditional ball $B$ of class $c$, the total description length is defined as
\begin{equation}
	L(B,c)
	=
	L_{\mathrm{data}}(B)
	+
	L_{\mathrm{sep}}(B,c)
	+
	L_{\mathrm{cls}}(c),
	\label{eq:total_ball_length}
\end{equation}
where $L_{\mathrm{data}}(B)$ describes the local feature distribution, $L_{\mathrm{sep}}(B,c)$ measures separation from negative evidence, and $L_{\mathrm{cls}}(c)$ encodes the class assignment.

\subsubsection{Data Encoding Term}

Samples inside $B$ are modeled by a diagonal Gaussian distribution.
Using Eq.~\eqref{eq:effective_radius_variance}, the data encoding length is defined as
\begin{equation}
	L_{\mathrm{data}}(B)
	=
	\frac{n_B}{2}
	\sum_{j=1}^{d}
	\left[
	1+\ln(2\pi \tilde{v}_{Bj})
	\right]
	+
	d\ln\max(n_B,2).
	\label{eq:data_encoding_length}
\end{equation}

\noindent\textit{Derivation.}
For a diagonal Gaussian model, the local density of $x\in X_B$ is
\begin{equation}
	p(x\mid B)
	=
	\prod_{j=1}^{d}
	\frac{1}{\sqrt{2\pi v_{Bj}}}
	\exp
	\left[
	-
	\frac{(x_j-\mu_{Bj})^2}{2v_{Bj}}
	\right].
	\label{eq:diagonal_gaussian_density}
\end{equation}
The negative log-likelihood of all samples in $B$ is
\begin{align}
	-\sum_{x\in X_B}\ln p(x\mid B)
	&=
	\frac{1}{2}
	\sum_{x\in X_B}
	\sum_{j=1}^{d}
	\left[
	\ln(2\pi v_{Bj})
	+
	\frac{(x_j-\mu_{Bj})^2}{v_{Bj}}
	\right].
	\label{eq:gaussian_nll_expand}
\end{align}
Using the empirical variance identity
\begin{equation}
	\frac{1}{n_B}
	\sum_{x\in X_B}
	(x_j-\mu_{Bj})^2
	=
	v_{Bj},
	\label{eq:empirical_variance_identity}
\end{equation}
the maximum-likelihood coding term becomes
\begin{equation}
	-\sum_{x\in X_B}\ln p(x\mid B)
	=
	\frac{n_B}{2}
	\sum_{j=1}^{d}
	\left[
	1+\ln(2\pi v_{Bj})
	\right].
	\label{eq:gaussian_nll_mle}
\end{equation}
Replacing $v_{Bj}$ by its effective form $\tilde v_{Bj}$ gives the stabilized MLE-style coding surrogate used in Eq.~\eqref{eq:data_encoding_length}.
When $v_{Bj}=\tilde v_{Bj}$, this expression coincides with the empirical maximum-likelihood coding term.
The additional term $d\ln\max(n_B,2)$ penalizes the complexity of the local Gaussian description and keeps the penalty well-defined for small balls.

\subsubsection{Boundary Separation Term}

The boundary separation term consists of an intrusion penalty and a margin penalty:
\begin{equation}
	L_{\mathrm{sep}}(B,c)
	=
	L_{\mathrm{intr}}(B,c)
	+
	L_{\mathrm{mar}}(B,c).
	\label{eq:separation_length}
\end{equation}
The intrusion penalty is defined as
\begin{equation}
	L_{\mathrm{intr}}(B,c)
	=
	n_B
	\left[
	-\ln
	\max(1-\bar{\rho}(B,c),\epsilon_{\mathrm{num}})
	\right],
	\label{eq:intrusion_length}
\end{equation}
where $\epsilon_{\mathrm{num}}>0$ keeps the logarithmic term well-defined.

The margin penalty is based on the normalized overlap ratio
\begin{equation}
	\omega(B,c)
	=
	\frac{
		\max\{0,\tilde{r}_B-\delta_c^{-}(\mu_B)\}
	}{
		\tilde{r}_B
	}.
	\label{eq:normalized_overlap}
\end{equation}
The corresponding coding cost is
\begin{equation}
	L_{\mathrm{mar}}(B,c)
	=
	n_B\ln\bigl(1+\omega(B,c)\bigr).
	\label{eq:margin_length}
\end{equation}

By construction, $L_{\mathrm{intr}}(B,c)$ increases with the average boundary risk $\bar{\rho}(B,c)$ before numerical clipping, and is non-decreasing after applying the lower bound $\epsilon_{\mathrm{num}}$.
Similarly, $L_{\mathrm{mar}}(B,c)$ increases with the overlap ratio $\omega(B,c)$.
Therefore, balls that are more exposed to nearby negative evidence receive larger separation costs.

\subsubsection{Class Assignment Term}

The class assignment term is
\begin{equation}
	L_{\mathrm{cls}}(c)
	=
	-\ln\pi_c,
	\label{eq:class_prior_length}
\end{equation}
where $\pi_c$ is defined in Eq.~\eqref{eq:class_prior}.
When a region is decomposed into multiple child balls, this term is counted for each child explanation.
Thus, additional fragmentation is accepted only when the reduction in data and separation costs compensates for the increased coding cost.

\subsection{Local MDL Model Competition}
\label{subsec:local_mdl_model_competition}

For each current ball $B$ of class $c$, three candidate local models are evaluated:
\begin{itemize}
	\item $M_1$: single-ball model;
	\item $M_2$: two-ball model;
	\item $M_3$: core-boundary model.
\end{itemize}
The selected model determines whether $B$ is retained as a stable ball, geometrically split into two child balls, or decomposed into a core ball and a boundary-sensitive child ball.

To avoid degenerate binary decompositions, each class is assigned a minimum resolvable granularity.
We first define the class-wise granularity candidate
\begin{equation}
	\alpha_c
	=
	\left\lceil
	\min
	\left\{
	\dfrac{\sqrt{n_c}}{\ln(\sqrt{d+2})},
	d+2
	\right\}
	\right\rceil .
	\label{eq:adaptive_granularity_candidate}
\end{equation}
The minimum admissible child-ball size is then defined as
\begin{equation}
	n_{\min}^{(c)}
	=
	\begin{cases}
		1, & n_c\leq 3,\\[3pt]
		\min\left\{
		\left\lfloor \dfrac{n_c}{2}\right\rfloor,
		\max\{2,\alpha_c\}
		\right\},
		& n_c>3 .
	\end{cases}
	\label{eq:adaptive_n_min}
\end{equation}
Here, $\sqrt{n_c}$ reflects the available class-wise sample scale, while $d+2$ provides a dimension-related reference.
The logarithmic correction moderates the influence of dimensionality, and the truncation by $\lfloor n_c/2\rfloor$ keeps binary decompositions feasible.
Thus, the admissible granularity is determined from the class size and feature dimension, rather than from a user-specified splitting threshold.

\subsubsection{Single-Ball Model}

The single-ball model assumes that the current ball already provides an adequate local explanation.
Its description length is
\begin{equation}
	L_1(B,c)
	=
	L(B,c).
	\label{eq:single_ball_model}
\end{equation}
If $M_1$ is selected, $B$ is retained as a stable class-conditional ball.

\subsubsection{Two-Ball Model}

The two-ball model assumes that $B$ contains two distinguishable geometric substructures.
It decomposes $B$ into two disjoint child balls:
\begin{equation}
	B
	\longrightarrow
	(B_L,B_R),
	X_{B_L}\cap X_{B_R}=\varnothing,
	X_{B_L}\cup X_{B_R}=X_B .
	\label{eq:two_ball_partition}
\end{equation}
The split is admissible only when
\begin{equation}
	n_{B_L}\geq n_{\min}^{(c)},
	\qquad
	n_{B_R}\geq n_{\min}^{(c)}.
	\label{eq:admissible_two_ball_split}
\end{equation}

For an admissible binary partition $B\rightarrow(B_L,B_R)$, the membership assignment of $n_B$ samples into two child balls with sizes $n_{B_L}$ and $n_{B_R}$ has
\begin{equation}
	\binom{n_B}{n_{B_L}}
	=
	\frac{n_B!}{n_{B_L}!n_{B_R}!},
	\qquad
	n_{B_L}+n_{B_R}=n_B
	\label{eq:number_of_partitions}
\end{equation}
possible configurations.
The code length for specifying this membership pattern is
\begin{equation}
	L_{\mathrm{part}}(B_L,B_R)
	=
	\ln
	\binom{n_B}{n_{B_L}} .
	\label{eq:partition_cost_binomial}
\end{equation}

\begin{proposition}[Entropy form of the partition code]
	\label{prop:partition_entropy}
	For a binary partition $B\rightarrow(B_L,B_R)$ with
	$p_L=n_{B_L}/n_B$ and $p_R=n_{B_R}/n_B$, the membership-pattern code satisfies
	\begin{equation}
		\ln\binom{n_B}{n_{B_L}}
		=
		n_BH(p_L,p_R)
		+
		O(\ln n_B),
		\label{eq:partition_entropy_bound}
	\end{equation}
	where
	$
	H(p_L,p_R)=-p_L\ln p_L-p_R\ln p_R
	$
	is the binary entropy.
\end{proposition}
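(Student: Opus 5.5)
The plan is to use Stirling's formula with explicit error bounds. Writing $n=n_B$, $k=n_{B_L}$ and $n-k=n_{B_R}$, the quantity of interest is
\begin{equation*}
\ln\binom{n}{k}=\ln n!-\ln k!-\ln (n-k)!.
\end{equation*}
We apply the Robbins form of Stirling's approximation, $\ln m! = m\ln m - m + \tfrac12\ln(2\pi m)+\theta_m$ with $0<\theta_m<1/(12m)$, valid for every $m\ge 1$.

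First, assume $1\le k\le n-1$, so both children are nonempty. This covers every admissible split, since Eq.~\eqref{eq:admissible_two_ball_split} forces $k,n-k\ge n_{\min}^{(c)}\ge 1$. Substituting the Stirling expansion, the linear terms $-m$ cancel because $k+(n-k)=n$. The leading terms then combine as
\begin{equation*}
n\ln n-k\ln k-(n-k)\ln(n-k) = -k\ln\frac{k}{n}-(n-k)\ln\frac{n-k}{n} = nH(p_L,p_R).
\end{equation*}
This identity uses nothing more than $n\ln n = k\ln n+(n-k)\ln n$.

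Next, we bound the remainder
\begin{equation*}
R=\tfrac12\ln\frac{2\pi n}{2\pi k\cdot 2\pi (n-k)}+\theta_n-\theta_k-\theta_{n-k}
= \tfrac12\ln\frac{n}{2\pi k(n-k)}+O(1).
\end{equation*}
Since $1\le k(n-k)\le n^2/4$, the ratio $n/(k(n-k))$ lies between $4/n$ and $n$. Hence $|R|\le \tfrac12\ln n+O(1)=O(\ln n)$, with the $\theta$ terms bounded by the absolute constant $1/4$.

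For completeness, we also treat the degenerate cases $k=0$ or $k=n$, which are not admissible splits. There $\binom{n}{k}=1$ and, with the convention $0\ln 0=0$, $H=0$, so the identity holds exactly.

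An alternative route avoids Stirling altogether. From $\binom{n}{k}p_L^k p_R^{n-k}\le 1$ we get the upper bound $\ln\binom nk\le nH$. From the fact that the $k$-th binomial term is the largest of the $n+1$ terms summing to $1$, we get the lower bound $\ln\binom nk\ge nH-\ln(n+1)$. This two-sided bound is cleaner and also gives $O(\ln n_B)$ directly. I would likely present it as the main argument and keep Stirling as the refined version.

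The main obstacle is the lower bound in this alternative: showing that the mode of the Binomial$(n,k/n)$ distribution is at $k$. This follows from checking the ratio of consecutive terms, $\frac{P(j+1)}{P(j)}=\frac{(n-j)k}{(j+1)(n-k)}$. The ratio is at least $1$ exactly when $j\le k-1$ (more precisely, when $j\le k-\frac{n-k}{n}$), so the terms increase up to $j=k$ and decrease afterwards. The other potential subtlety is uniformity in $k$: the $O(\ln n_B)$ constant must not depend on $p_L$. Both approaches give absolute constants, namely $\tfrac12$ plus a constant in the Stirling route, and the factor $\ln(n+1)$ in the combinatorial route.
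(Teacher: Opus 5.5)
Your proposal is correct. Its Stirling part is the paper's own argument. The paper applies $\ln m! = m\ln m - m + O(\ln m)$ to the three factorials, cancels the linear terms, and rewrites $n_B\ln n_B-n_{B_L}\ln n_{B_L}-n_{B_R}\ln n_{B_R}$ as $n_BH(p_L,p_R)$, exactly as you do.

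You differ from the paper in two ways. First, Robbins' bounds make the remainder explicit: it is $-\tfrac12\ln(2\pi n_Bp_Lp_R)$ plus a term of modulus below $1/4$. So uniformity in $p_L$ and the small-child cases are verified rather than assumed. The paper's unqualified $O(\ln m)$ needs exactly this care when a child has size $0$ or $1$ (at $m=1$ the Stirling error is $1$ while $\ln m=0$), and it glosses over this when absorbing $O(\ln n_{B_L})$ and $O(\ln n_{B_R})$ into $O(\ln n_B)$.

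Second, the method-of-types argument you would lead with is a genuinely different route. It uses no asymptotics, only the expansion $(p_L+p_R)^{n_B}=1$ and the location of its largest term (your ratio computation is right). It gives the non-asymptotic sandwich
\[
n_BH(p_L,p_R)-\ln(n_B+1)\le\ln\binom{n_B}{n_{B_L}}\le n_BH(p_L,p_R).
\]
This is shorter and fully rigorous. The one-sided upper bound also tells you something the paper's proof does not: the entropy surrogate in Eq.~\eqref{eq:partition_cost} never undercharges a split relative to the exact membership code. Dropping the $O(\ln n_B)$ term is therefore conservative with respect to fragmentation.

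What the refined Stirling route buys in return is the sharper size of that overcharge. It is about $\tfrac12\ln(2\pi n_Bp_Lp_R)\le\tfrac12\ln n_B+O(1)$, rather than $\ln(n_B+1)$.
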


\begin{proof}
	Using Stirling's approximation
	\begin{equation}
		\ln n!
		=
		n\ln n-n+O(\ln n),
	\end{equation}
	we obtain
	\begin{align}
		\ln
		\binom{n_B}{n_{B_L}}
		&=
		\ln n_B!
		-
		\ln n_{B_L}!
		-
		\ln n_{B_R}!
		\nonumber\\
		&=
		n_B\ln n_B
		-
		n_{B_L}\ln n_{B_L}
		-
		n_{B_R}\ln n_{B_R}
		+
		O(\ln n_B).
	\end{align}
	Substituting $n_{B_L}=n_Bp_L$ and $n_{B_R}=n_Bp_R$ gives
	\begin{equation}
		\ln
		\binom{n_B}{n_{B_L}}
		=
		-n_B(p_L\ln p_L+p_R\ln p_R)
		+
		O(\ln n_B).
	\end{equation}
	Therefore,
	\begin{equation}
		\ln
		\binom{n_B}{n_{B_L}}
		=
		n_BH(p_L,p_R)+O(\ln n_B).
	\end{equation}
\end{proof}

Ignoring the lower-order term $O(\ln n_B)$ in Proposition~\ref{prop:partition_entropy}, the partition coding cost is written as
\begin{equation}
	L_{\mathrm{part}}(B_L,B_R)
	=
	n_B
	H
	\left(
	\frac{n_{B_L}}{n_B},
	\frac{n_{B_R}}{n_B}
	\right).
	\label{eq:partition_cost}
\end{equation}

In addition to the membership-pattern cost, a logarithmic candidate-selection term is used:
\begin{equation}
	L_{\mathrm{sel}}^{(2)}(B)
	=
	\ln\max(n_B,2)
	+
	\ln\max(|V_B|,1).
	\label{eq:two_ball_selection_cost}
\end{equation}
The first term accounts for the finite search over admissible ordered cuts, while the second term accounts for selecting a candidate projection direction.
This cost does not encode the membership assignment again; the induced binary membership structure is already represented by $L_{\mathrm{part}}(B_L,B_R)$.
The description length of a candidate two-ball explanation is
\begin{align}
	L_2(B_L,B_R;c)
	&=
	L_{\mathrm{part}}(B_L,B_R)
	+
	L_{\mathrm{sel}}^{(2)}(B)
	\nonumber\\
	&\quad
	+
	L(B_L,c)
	+
	L(B_R,c).
	\label{eq:two_ball_length}
\end{align}

Candidate splits are generated by projecting the samples in $B$ onto a set of data-induced directions.
Let $V_B$ denote this direction set.
It contains directions derived from the internal geometry of $B$ and, when available, directions induced by negative boundary evidence.

Let
\begin{equation}
	\Sigma_B
	=
	\frac{1}{n_B}
	\sum_{x\in X_B}
	(x-\mu_B)(x-\mu_B)^\top
	\label{eq:local_covariance}
\end{equation}
be the empirical covariance matrix of $B$.
The principal geometric direction is
\begin{equation}
	v_{\mathrm{pca}}
	=
	\arg\max_{\|v\|_2=1}
	v^\top \Sigma_B v .
	\label{eq:pca_direction}
\end{equation}
When $n_c^->0$, let $\mathcal{N}_{k_B}^{-}(\mu_B)$ be the set of $k_B$ nearest negative samples to $\mu_B$, where
\begin{equation}
	k_B
	=
	\min
	\left\{
	n_c^-,
	\left\lceil\sqrt{n_B}\right\rceil
	\right\}.
	\label{eq:k_negative}
\end{equation}
The negative-evidence direction is defined as
\begin{equation}
	v_{\mathrm{neg}}
	=
	\mu_B
	-
	\frac{1}{k_B}
	\sum_{z\in \mathcal{N}_{k_B}^{-}(\mu_B)}
	z .
	\label{eq:negative_direction}
\end{equation}

To describe the contrast between relatively safe and boundary-exposed samples, define
\begin{equation}
	h_B
	=
	\max
	\left\{
	1,
	\min
	\left(
	\left\lfloor \frac{n_B}{2}\right\rfloor,
	\left\lceil\sqrt{n_B}\right\rceil
	\right)
	\right\}.
	\label{eq:h_safe_risky}
\end{equation}
Let $S_B$ be the $h_B$ samples with the largest $\delta_c^{-}(x)$, and let $R_B$ be the $h_B$ samples with the smallest $\delta_c^{-}(x)$.
The safe-boundary contrast direction is
\begin{equation}
	v_{\mathrm{sb}}
	=
	\frac{1}{h_B}
	\sum_{x\in S_B}x
	-
	\frac{1}{h_B}
	\sum_{x\in R_B}x .
	\label{eq:safe_boundary_direction}
\end{equation}
A variance-dominant coordinate direction is also included:
\begin{equation}
	v_{\mathrm{var}}
	=
	e_{j^\ast},
	\qquad
	j^\ast
	=
	\arg\max_{1\leq j\leq d} v_{Bj}.
	\label{eq:fallback_direction}
\end{equation}
All nonzero directions are normalized to unit length, and duplicate directions are removed.

For each $v\in V_B$, samples are projected as
\begin{equation}
	t_i=x_i^\top v,
	\qquad
	x_i\in X_B.
	\label{eq:projection_value}
\end{equation}
The samples are then sorted according to $t_i$, and all admissible cut positions are examined.
Let $K_{B,v}^{(2)}$ denote the set of valid cut positions along direction $v$.
The optimal two-ball description length is
\begin{equation}
	L_2^\ast(B,c)
	=
	\min_{v\in V_B}
	\min_{k\in K_{B,v}^{(2)}}
	L_2
	\left(
	B_L^{(v,k)},
	B_R^{(v,k)};
	c
	\right).
	\label{eq:optimal_two_ball_length}
\end{equation}

\subsubsection{Core-Boundary Model}

The core-boundary model describes boundary heterogeneity within a local region.
A ball may be geometrically compact while still containing samples with different degrees of exposure to negative evidence.
This model therefore decomposes $B$ into an interior core and a boundary-sensitive subset:
\begin{equation}
	B
	\longrightarrow
	(B_{\mathrm{core}},B_{\mathrm{bd}}),
	\label{eq:core_boundary_partition}
\end{equation}
where $B_{\mathrm{core}}$ contains samples relatively far from negative evidence, and $B_{\mathrm{bd}}$ contains samples closer to negative evidence.

Let the samples in $B$ be sorted by descending nearest-negative distance:
\begin{equation}
	\delta_c^{-}(x_{(1)})
	\geq
	\delta_c^{-}(x_{(2)})
	\geq
	\cdots
	\geq
	\delta_c^{-}(x_{(n_B)}).
	\label{eq:boundary_sorting}
\end{equation}
For a boundary size $n_{\mathrm{bd}}$, define
\begin{equation}
	n_{\mathrm{core}}
	=
	n_B-n_{\mathrm{bd}} .
	\label{eq:core_size}
\end{equation}
The corresponding candidate subsets are
\begin{equation}
	X_{B_{\mathrm{core}}}
	=
	\{x_{(i)}\}_{i=1}^{n_{\mathrm{core}}},
	\qquad
	X_{B_{\mathrm{bd}}}
	=
	\{x_{(i)}\}_{i=n_{\mathrm{core}}+1}^{n_B}.
	\label{eq:core_boundary_candidate}
\end{equation}
The decomposition is admissible only if
\begin{equation}
	n_{\mathrm{core}}\geq n_{\min}^{(c)},
	\qquad
	n_{\mathrm{bd}}\geq n_{\min}^{(c)}.
	\label{eq:admissible_core_boundary_split}
\end{equation}

The core-boundary decomposition also induces a binary membership pattern.
By Proposition~\ref{prop:partition_entropy}, its membership coding cost is
\begin{equation}
	L_{\mathrm{part}}
	(B_{\mathrm{core}},B_{\mathrm{bd}})
	=
	n_B
	H
	\left(
	\frac{n_{\mathrm{core}}}{n_B},
	\frac{n_{\mathrm{bd}}}{n_B}
	\right),
	\label{eq:core_boundary_partition_cost}
\end{equation}
where $H(\cdot,\cdot)$ is the binary entropy.

In addition, a logarithmic candidate-selection cost is used for the admissible boundary-size search:
\begin{equation}
	L_{\mathrm{sel}}^{(3)}(B)
	=
	\ln\max(n_B,2).
	\label{eq:core_boundary_selection_cost}
\end{equation}
This term accounts for selecting a boundary size from the ordered nearest-negative sequence.
It is separate from the membership-pattern cost in Eq.~\eqref{eq:core_boundary_partition_cost}, which encodes the induced core-boundary assignment.

The description length of a candidate core-boundary explanation is
\begin{align}
	L_3(B_{\mathrm{core}},B_{\mathrm{bd}};c)
	&=
	L_{\mathrm{part}}
	(B_{\mathrm{core}},B_{\mathrm{bd}})
	+
	L_{\mathrm{sel}}^{(3)}(B)
	\nonumber\\
	&\quad
	+
	L(B_{\mathrm{core}},c)
	+
	L(B_{\mathrm{bd}},c).
	\label{eq:core_boundary_length}
\end{align}
Let $K_B^{(3)}$ denote the set of admissible boundary sizes.
The optimal core-boundary description length is
\begin{equation}
	L_3^\ast(B,c)
	=
	\min_{n_{\mathrm{bd}}\in K_B^{(3)}}
	L_3
	\left(
	B_{\mathrm{core}}^{(n_{\mathrm{bd}})},
	B_{\mathrm{bd}}^{(n_{\mathrm{bd}})};
	c
	\right).
	\label{eq:optimal_core_boundary_length}
\end{equation}

\begin{remark}[Relation between $M_2$ and $M_3$]
	The two-ball model $M_2$ captures geometric multi-modality through projection-based splitting.
	The core-boundary model $M_3$ captures boundary heterogeneity through nearest-negative distances.
	Thus, the two models describe different local structures and may lead to different decompositions.
\end{remark}

\begin{remark}[Boundary-sensitive child ball]
	The boundary subset $B_{\mathrm{bd}}$ is not discarded and is not treated as a residual set.
	It remains a class-conditional child ball of class $c$ and is returned to the unresolved queue for further MDL evaluation.
	Consequently, all training samples are eventually assigned to stable granular balls, and no independent residual points are produced.
\end{remark}

\subsection{MDL Selection Rule}
\label{subsec:mdl_selection_rule}

For a current ball $B$ of class $c$, the competing description lengths are
\begin{equation}
	L_1(B,c),
	\qquad
	L_2^\ast(B,c),
	\qquad
	L_3^\ast(B,c).
	\label{eq:competing_lengths}
\end{equation}
For infeasible candidate decompositions, the corresponding description length is set to $+\infty$.
The best local explanation is first obtained by
\begin{equation}
	\widehat{M}
	=
	\mathop{\arg\min}\limits_{M\in\{M_1,M_2,M_3\}}
	L_M(B,c),
	\label{eq:raw_mdl_selection}
\end{equation}
where $L_{M_1}(B,c)=L_1(B,c)$, $L_{M_2}(B,c)=L_2^\ast(B,c)$, and $L_{M_3}(B,c)=L_3^\ast(B,c)$.
To avoid decisions caused by negligible numerical differences, the selected model is
\begin{equation}
	M^\ast
	=
	\begin{cases}
		\widehat{M},
		&
		\widehat{M}\neq M_1
		\ \text{and}\
		L_{\widehat{M}}(B,c)<L_1(B,c)-\epsilon_{\mathrm{mdl}},
		\\[4pt]
		M_1,
		&
		\text{otherwise}.
	\end{cases}
	\label{eq:conservative_selection_rule}
\end{equation}
If $M^\ast=M_1$, the current ball is declared stable.
If $M^\ast=M_2$ or $M^\ast=M_3$, the corresponding optimal decomposition is applied, and the two child balls are returned to the unresolved set.

\subsection{Prediction by Class-Level Mixture Coding}
\label{subsec:prediction}

After training, each class $c$ is represented by a set of stable granular balls:
\begin{equation}
	\mathcal{G}_c
	=
	\{B_{c1},B_{c2},\ldots,B_{cm_c}\},
	\qquad
	m_c=|\mathcal{G}_c|.
	\label{eq:class_ball_set}
\end{equation}
For $B_{ck}\in\mathcal{G}_c$, its mixture weight is
\begin{equation}
	w_{ck}
	=
	\frac{n_{ck}}{n_c},
	\qquad
	\sum_{k=1}^{m_c}w_{ck}=1.
	\label{eq:mixture_weight}
\end{equation}

Since one class may be represented by multiple stable balls, prediction is performed at the class level.
For a test sample $x$, its coding energy with respect to ball $B\in\mathcal{G}_c$ is
\begin{equation}
	E_c(x,B)
	=
	E_{\mathrm{g}}(x,B)
	+
	E_{\mathrm{b}}(B,c)
	+
	E_{\mathrm{o}}(x,B),
	\label{eq:test_energy}
\end{equation}
where $E_{\mathrm{g}}$ is the Gaussian feature energy, $E_{\mathrm{b}}$ is the boundary-risk energy, and $E_{\mathrm{o}}$ is the outside-ball penalty.

The Gaussian feature energy is
\begin{equation}
	E_{\mathrm{g}}(x,B)
	=
	\frac{1}{2}
	\left[
	\sum_{j=1}^{d}
	\frac{(x_j-\mu_{Bj})^2}{\tilde{v}_{Bj}^{\mathrm{pred}}}
	+
	\sum_{j=1}^{d}
	\ln(2\pi\tilde{v}_{Bj}^{\mathrm{pred}})
	\right].
	\label{eq:gaussian_energy}
\end{equation}
The boundary-risk energy is
\begin{equation}
	E_{\mathrm{b}}(B,c)
	=
	-\ln
	\max
	\left(
	1-\bar{\rho}(B,c),
	\epsilon_{\mathrm{num}}
	\right).
	\label{eq:boundary_energy}
\end{equation}
The outside-ball penalty is
\begin{equation}
	E_{\mathrm{o}}(x,B)
	=
	\ln
	\left[
	1+
	\frac{
		\max\{0,\|x-\mu_B\|_2-\tilde r_B^{\mathrm{pred}}\}
	}{
		\tilde r_B^{\mathrm{pred}}
	}
	\right].
	\label{eq:outside_penalty}
\end{equation}

The local energies are aggregated into a class-level coding likelihood:
\begin{equation}
	\mathcal{A}_c(x)
	=
	\sum_{k=1}^{m_c}
	w_{ck}
	\exp
	\left(
	-E_c(x,B_{ck})
	\right).
	\label{eq:class_coding_likelihood}
\end{equation}
The prior-weighted coding likelihood of class $c$ is
\begin{equation}
	\mathcal{J}_c(x)
	=
	\pi_c \mathcal{A}_c(x).
	\label{eq:prior_weighted_coding_likelihood}
\end{equation}
The quantities $\mathcal{A}_c(x)$ and $\mathcal{J}_c(x)$ are used for relative coding comparison among classes and are not required to be normalized probability densities over the input space.
Taking the negative logarithm gives the class-level mixture coding cost:
\begin{align}
	S_c(x)
	&=
	-\ln \mathcal{J}_c(x)
	\nonumber\\
	&=
	-\ln\pi_c
	-
	\ln
	\left[
	\sum_{k=1}^{m_c}
	w_{ck}
	\exp
	\left(
	-E_c(x,B_{ck})
	\right)
	\right].
	\label{eq:class_score}
\end{align}
The predicted class is
\begin{equation}
	\hat{y}
	=
	\arg\min_{c\in Y}
	S_c(x).
	\label{eq:prediction_rule}
\end{equation}

\begin{remark}[Class-level decision]
	Equation~\eqref{eq:prediction_rule} compares class-level mixture coding costs.
	Thus, prediction is not based on a single nearest ball or on the minimum individual-ball cost.
	All stable balls of the same class jointly contribute to the coding score of that class.
\end{remark}

\subsection{Overall Classifier}
\label{subsec:overall_classifier}

Fig.~\ref{fig:framework} gives an overview of MDL-GBC.
The method first constructs class-conditional positive and negative evidence in a one-vs-rest manner.
Then, each current ball is evaluated under a local MDL model competition among the single-ball, two-ball, and core-boundary models.
The selected model determines whether the ball is retained as a stable ball, geometrically split, or decomposed into a core ball and a boundary-sensitive child ball for further recursive evaluation.
The resulting stable balls form the final class-wise representation and are used for class-level mixture coding during prediction.

\begin{figure*}[!t]
	\centering
	\includegraphics[width=\textwidth]{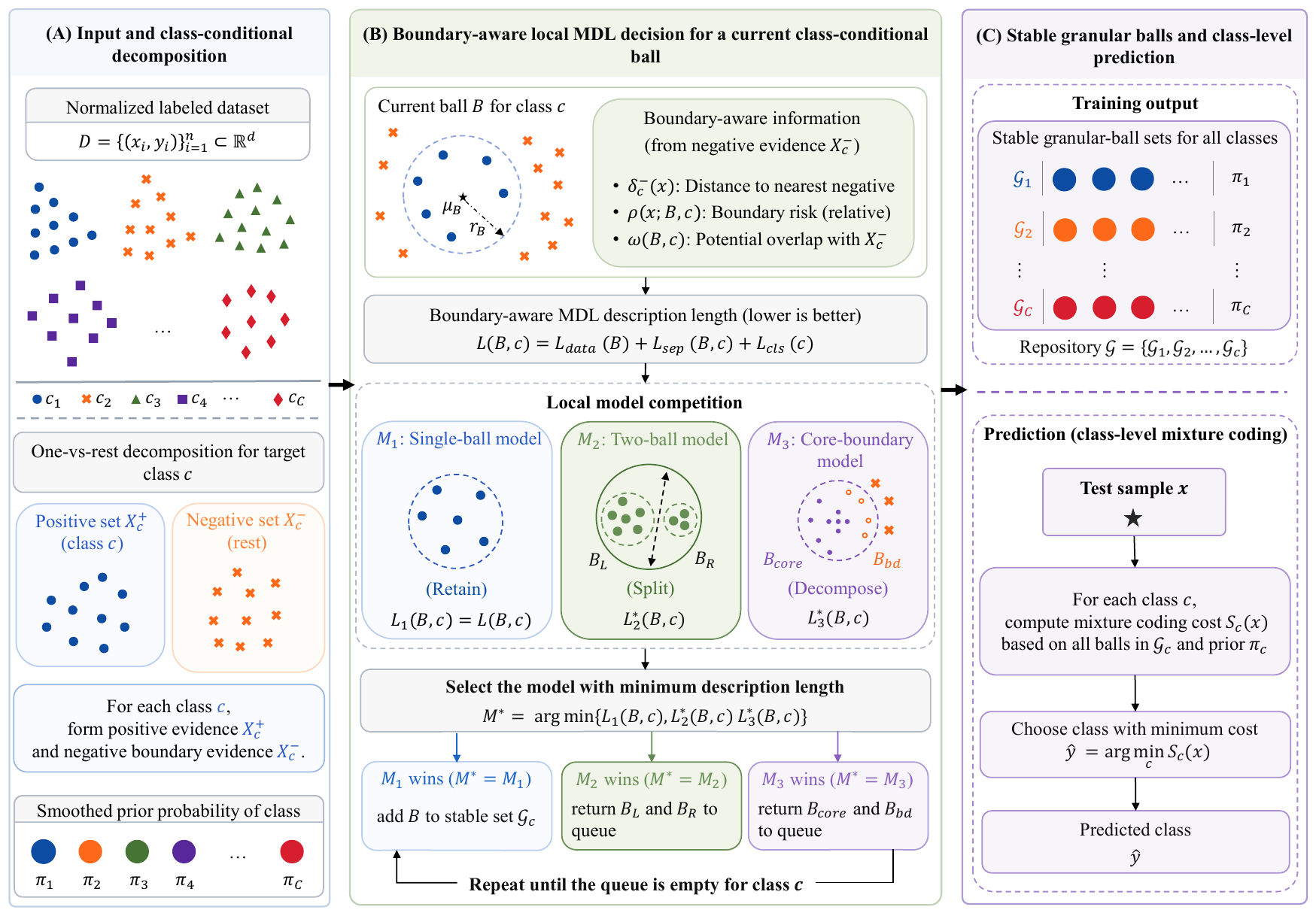}
	\caption{
		Overview of the proposed MDL-GBC framework.
		(A) The normalized labeled dataset is decomposed in a one-vs-rest manner for each target class.
		For class $c$, samples from the target class form the positive evidence $X_c^+$, while samples from the remaining classes form the negative boundary evidence $X_c^-$; a smoothed class prior $\pi_c$ is also estimated.
		(B) For a current class-conditional ball $B$, boundary-aware information induced by nearby negative evidence is incorporated into a unified MDL description length.
		Three candidate local explanations are then compared, including the single-ball model, the two-ball model, and the core-boundary model.
		The model with the minimum description length determines whether $B$ is retained as a stable ball, split into two sub-balls, or decomposed into a core ball and a boundary-sensitive child ball.
		This process is recursively repeated until no unresolved ball remains for class $c$.
		(C) The stable granular balls generated for all classes constitute the training output.
		During prediction, each class is evaluated by a class-level mixture coding cost over its stable granular-ball set and prior, and the test sample is assigned to the class with the minimum coding cost.
	}
	\label{fig:framework}
\end{figure*}

The complete procedure is summarized in Algorithm~\ref{alg:classifier}.

\begin{algorithm}[htbp]
	\caption{MDL-GBC}
	\label{alg:classifier}
	\KwIn{Training set $D=\{(x_i,y_i)\}_{i=1}^{n}$; test set $X_{\mathrm{test}}$}
	\KwOut{Predicted label set $\hat{Y}_{\mathrm{test}}$}
	
	Normalize features into $[0,1]^d$\;
	Encode class labels as $Y=\{1,2,\ldots,C\}$\;
	Estimate class priors $\{\pi_c\}_{c=1}^{C}$ by Eq.~\eqref{eq:class_prior}\;
	
	\For{$c=1$ \KwTo $C$}{
		Construct $X_c^+$ and $X_c^-$\;
		Build nearest-negative evidence from $X_c^-$\;
		Compute $n_{\min}^{(c)}$ by Eq.~\eqref{eq:adaptive_n_min}\;
		
		Initialize the unresolved queue $\mathcal{Q}$ with $B_0=(X_c^+,c)$\;
		Initialize $\mathcal{G}_c=\varnothing$\;
		
		\While{$\mathcal{Q}$ is not empty}{
			Pop a ball $B$ from $\mathcal{Q}$\;
			
			\If{$n_B<2n_{\min}^{(c)}$}{
				Record the boundary quantities of $B$\;
				Add $B$ to $\mathcal{G}_c$\;
				\textbf{continue}\;
			}
			
			Evaluate $L_1(B,c)$, $L_2^\ast(B,c)$, and $L_3^\ast(B,c)$\;
			Select $M^\ast$ according to Eq.~\eqref{eq:conservative_selection_rule}\;
			
			\uIf{$M^\ast=M_1$}{
				Record the boundary quantities of $B$\;
				Add $B$ to $\mathcal{G}_c$\;
			}
			\uElseIf{$M^\ast=M_2$}{
				Obtain the optimal two-ball decomposition $(B_L,B_R)$\;
				Insert $B_L$ and $B_R$ into $\mathcal{Q}$\;
			}
			\ElseIf{$M^\ast=M_3$}{
				Obtain the optimal core-boundary decomposition $(B_{\mathrm{core}},B_{\mathrm{bd}})$\;
				Insert $B_{\mathrm{core}}$ and $B_{\mathrm{bd}}$ into $\mathcal{Q}$\;
			}
		}
	}
	
	Set $\mathcal{G}=\bigcup_{c=1}^{C}\mathcal{G}_c$\;
	Estimate prediction-stage radius and variance floors\;
	Initialize $\hat{Y}_{\mathrm{test}}=\varnothing$\;
	
	\ForEach{$x\in X_{\mathrm{test}}$}{
		Normalize $x$ using the training transformation\;
		Compute $S_c(x)$ for all $c\in Y$\;
		Assign $\hat{y}=\arg\min_{c\in Y}S_c(x)$\;
		Append $\hat{y}$ to $\hat{Y}_{\mathrm{test}}$\;
	}
	Return $\hat{Y}_{\mathrm{test}}$\;
\end{algorithm}

\section{Complexity Analysis}
\label{sec:complexity_analysis}

This section analyzes the computational complexity of MDL-GBC. 
Since the number and sizes of generated granular balls depend on the sequence of MDL selection decisions, the training complexity is expressed in an output-sensitive form.

Feature normalization requires $O(nd)$ time and $O(nd)$ storage when the normalized data matrix is retained. 
Consider a class-conditional ball $B$ containing $m$ samples in $d$ dimensions. 
Computing its center, variance vector, sufficient statistics, and radius requires $O(md)$ time.

Let $T_{\mathrm{NN}}^{\mathrm{s}}(m,n_c^-,d)$ denote the cost of querying nearest-negative distances for $m$ samples from the negative set $X_c^-$. 
Let $T_{\mathrm{NN}}^{\mathrm{ctr}}(r,n_c^-,d)$ denote the cost of querying nearest-negative distances for $r$ candidate centers. 
Their exact forms depend on the adopted nearest-neighbor implementation.

For the single-ball model, the dominant cost comes from local statistic computation and boundary-term evaluation:
\begin{equation}
	\begin{aligned}
		T_{M_1}(B)
		=
		O\bigl(
		& md
		+
		T_{\mathrm{NN}}^{\mathrm{s}}(m,n_c^-,d)
		\\
		&+
		T_{\mathrm{NN}}^{\mathrm{ctr}}(1,n_c^-,d)
		\bigr).
	\end{aligned}
	\label{eq:m1_complexity}
\end{equation}

For the two-ball model, let $p_B=|V_B|$ be the number of candidate projection directions, and let $T_{\mathrm{pca}}(m,d)$ denote the cost of extracting the first principal direction of $B$. 
Let $K_{B,v}^{(2)}$ be the set of valid cut positions along direction $v$. 
For worst-case notation, define
\begin{equation}
	K_B^{(2)}
	=
	\max_{v\in V_B}
	|K_{B,v}^{(2)}|.
	\label{eq:max_valid_cuts_m2}
\end{equation}
For compactness, define
\begin{equation}
	\begin{aligned}
		T_{\mathrm{ctr}}^{(2)}(B)
		=
		T_{\mathrm{NN}}^{\mathrm{ctr}}
		\bigl(
		2K_B^{(2)},
		n_c^-,
		d
		\bigr),
	\end{aligned}
	\label{eq:center_query_cost_m2}
\end{equation}
and
\begin{equation}
	\Phi_2(B)
	=
	md
	+
	m\ln m
	+
	m^2d
	+
	T_{\mathrm{ctr}}^{(2)}(B).
	\label{eq:phi_m2}
\end{equation}
Then the worst-case cost of evaluating $M_2$ is
\begin{equation}
	\begin{aligned}
		T_{M_2}(B)
		=
		O\bigl(
		& T_{\mathrm{pca}}(m,d)
		+
		T_{\mathrm{NN}}^{\mathrm{s}}(m,n_c^-,d)
		\\
		&+
		p_B\Phi_2(B)
		\bigr).
	\end{aligned}
	\label{eq:m2_complexity}
\end{equation}
In $\Phi_2(B)$, the terms $md$ and $m\ln m$ correspond to projection and sorting along one candidate direction. 
The term $m^2d$ comes from exact radius evaluation over admissible cuts, and $T_{\mathrm{ctr}}^{(2)}(B)$ accounts for nearest-negative queries of candidate child centers.

For the core-boundary model, samples are sorted once according to nearest-negative distances. 
Unlike $M_2$, this model does not enumerate projection directions. 
Define
\begin{equation}
	\begin{aligned}
		T_{\mathrm{ctr}}^{(3)}(B)
		=
		T_{\mathrm{NN}}^{\mathrm{ctr}}
		\bigl(
		2|K_B^{(3)}|,
		n_c^-,
		d
		\bigr),
	\end{aligned}
	\label{eq:center_query_cost_m3}
\end{equation}
and
\begin{equation}
	\Phi_3(B)
	=
	m\ln m
	+
	m^2d
	+
	T_{\mathrm{ctr}}^{(3)}(B).
	\label{eq:phi_m3}
\end{equation}
The worst-case cost of evaluating $M_3$ is
\begin{equation}
	\begin{aligned}
		T_{M_3}(B)
		=
		O\bigl(
		& T_{\mathrm{NN}}^{\mathrm{s}}(m,n_c^-,d)
		+
		\Phi_3(B)
		\bigr).
	\end{aligned}
	\label{eq:m3_complexity}
\end{equation}
The absence of the factor $p_B$ indicates that $M_3$ uses boundary-distance ordering rather than projection-direction enumeration.

The per-ball evaluation cost is therefore
\begin{equation}
	\begin{aligned}
		T_{\mathrm{eval}}(B)
		=
		O\bigl(
		& T_{M_1}(B)
		+
		T_{M_2}(B)
		+
		T_{M_3}(B)
		\bigr),
	\end{aligned}
	\label{eq:per_ball_complexity}
\end{equation}
where infeasible candidate models are omitted.

Let $\mathcal{T}_c$ denote the set of all balls evaluated during recursive generation for class $c$. 
The total training complexity is
\begin{equation}
	\begin{aligned}
		T_{\mathrm{train}}
		=
		O\biggl(
		nd
		+
		\sum_{c=1}^{C}
		\sum_{B\in \mathcal{T}_c}
		T_{\mathrm{eval}}(B)
		\biggr).
	\end{aligned}
	\label{eq:total_training_complexity}
\end{equation}
This bound is output-sensitive because both the number of evaluated balls and their sizes are determined by the MDL selection process. 
Simple and well-separated regions may terminate early, whereas complex or boundary-sensitive regions may require further recursive evaluation.

For prediction, let
\begin{equation}
	M
	=
	\sum_{c=1}^{C}
	|\mathcal{G}_c|
	\label{eq:number_stable_balls}
\end{equation}
be the total number of stable balls. 
Computing the coding energy between one test sample and one stable ball costs $O(d)$. 
Thus, the prediction complexity for $n_{\mathrm{test}}$ test samples is
\begin{equation}
	T_{\mathrm{pred}}
	=
	O(n_{\mathrm{test}}Md).
	\label{eq:prediction_complexity}
\end{equation}

The main storage components include the normalized training data, nearest-negative structures, active balls during generation, and the final stable-ball representation. 
The normalized data require $O(nd)$ space. 
When nearest-negative structures are constructed class by class, their auxiliary storage is bounded by $O(nd)$.

The deployable statistical model stores the center, variance, radius, label, sample count, and boundary quantities of each stable ball, requiring
$
	O(Md)
$
space, excluding normalization parameters. 
If sample memberships of stable balls are also retained for analysis or visualization, the total storage becomes
$
	O(nd+Md).
$

\section{Experiments}
\label{sec:experiments}

\subsection{Experimental Settings}
\label{subsec:experimental_settings}

Experiments were conducted on 18 real UCI benchmark datasets\footnote{\url{https://archive.ics.uci.edu/}}, covering different sample sizes, feature dimensions, class numbers, and imbalance levels. Their statistics are summarized in Table~\ref{tab:dataset_info}. A dataset is marked as imbalanced when the imbalance ratio, defined as the ratio between the largest and smallest class sizes, satisfies $\mathrm{IR}\geq 3$.

\begin{table*}[!t]
	\centering
	\caption{Dataset information and abbreviations.}
	\label{tab:dataset_info}
	\scriptsize
	\setlength{\tabcolsep}{3.5pt}
	\renewcommand{\arraystretch}{0.95}
	\begin{minipage}[t]{0.43\textwidth}
		\centering
		\begin{tabular}{llrrrc}
			\toprule
			Abbr. & Dataset & Samples & Features & Classes & Type \\
			\midrule
			Lenses & Lenses & 24 & 3 & 3 & I \\
			Zoo & Zoo & 101 & 16 & 7 & I \\
			Iris & iris & 150 & 4 & 3 & B \\
			Wine & Wine & 178 & 13 & 3 & B \\
			Seeds & seeds & 210 & 8 & 3 & B \\
			Thyroid & new-thyroid & 214 & 5 & 3 & I \\
			Heart & Statlog (Heart) & 270 & 13 & 2 & B \\
			Ionosphere & Ionosphere & 351 & 34 & 2 & B \\
			Libras & libras & 360 & 90 & 15 & B \\
			\bottomrule
		\end{tabular}
	\end{minipage}
	\hfill
	\begin{minipage}[t]{0.56\textwidth}
		\centering
		\begin{tabular}{llrrrc}
			\toprule
			Abbr. & Dataset & Samples & Features & Classes & Type \\
			\midrule
			Balance & Balance Scale & 625 & 4 & 3 & I \\
			Australian & Statlog (Australian Credit Approval) & 690 & 14 & 2 & B \\
			Banknote & Banknote Authentication & 1372 & 4 & 2 & B \\
			Rice & Rice (Cammeo and Osmancik) & 3810 & 7 & 2 & B \\
			OptDigits & Optical Recognition of Handwritten Digits & 5620 & 64 & 10 & B \\
			ISOLET & isolet & 7797 & 617 & 26 & B \\
			HTRU2 & HTRU2 & 17898 & 8 & 2 & I \\
			Shuttle & Statlog (Shuttle) & 58000 & 7 & 7 & I \\
			Skin & Skin Segmentation & 245057 & 4 & 2 & I \\
			\bottomrule
		\end{tabular}
	\end{minipage}
	\vspace{0.1cm}
	
	\begin{minipage}{0.98\textwidth}
		\footnotesize
		\textbf{Note:} "B" and "I" indicate balanced and imbalanced datasets, respectively.
	\end{minipage}
\end{table*}

All datasets were preprocessed by Min--Max normalization. For each feature,
\begin{equation}
	x'=\frac{x-x_{\min}}{x_{\max}-x_{\min}},
\end{equation}
where $x_{\min}$ and $x_{\max}$ were estimated from the training fold and then applied to the corresponding test fold to avoid information leakage.

A stratified 10-fold cross-validation protocol was used for each dataset, and the results are reported as mean $\pm$ standard deviation. The main metrics are Accuracy (Acc.) and Macro-F1 (MF1), where MF1 is retained as the class-balanced metric because it reflects macro-level class-wise performance and keeps the comparison table compact.

The compared methods include four classical baselines, XGBoost~\cite{chen2016xgboost}, KNN~\cite{cover1967nearest}, CART~\cite{breiman2017classification}, and SVM~\cite{platt1999probabilistic}, and three granular-ball-related methods, GBTSVM~\cite{quadir2024granular}, ScOrGBC~\cite{guo2026scorgbc}, and SGBSkNN~\cite{Zhang2026_SGB}. XGBoost was implemented through its scikit-learn-compatible interface, and KNN, CART, and SVM were implemented using scikit-learn~\cite{pedregosa2011scikit} with default settings. The granular-ball-related baselines used the official open-source codes and recommended parameter settings. MDL-GBC is non-parametric and requires no dataset-specific hyperparameter tuning. For reproducibility, the random seed was fixed to 2035 whenever randomness was involved.

All experiments were conducted in Python on a workstation with an AMD Ryzen 9 8940HX CPU, 32 GB RAM, and Windows 11.

\subsection{Overall Classification Performance}
\label{subsec:overall_classification_performance}

\begin{table*}[htbp]
	\centering
	\caption{Dataset-level comparison of Accuracy and Macro-F1 on the selected 18 datasets.}
	\label{tab:selected_18_classification_comparison}
	\begin{threeparttable}
		\scriptsize
		\resizebox{\textwidth}{!}{
			\begin{tabular}{llcccccccc}
				\toprule
				\multirow{2}{*}{Dataset} & \multirow{2}{*}{Metric} 
				& \multicolumn{1}{c}{Proposed} 
				& \multicolumn{4}{c}{Classical Baselines} 
				& \multicolumn{3}{c}{Granular-Ball Methods} \\
				\cmidrule(lr){3-3}
				\cmidrule(lr){4-7}
				\cmidrule(lr){8-10}
				& & MDL-GBC & XGBoost & KNN & CART & SVM & GBTSVM & ScOrGBC & SGBSkNN \\
				\midrule
				
				\multirow{2}{*}{Lenses}
				& Acc. & \textbf{0.9000 $\pm$ 0.1528} & 0.8500 $\pm$ 0.1893 & 0.8500 $\pm$ 0.1893 & \textbf{0.9000 $\pm$ 0.1528} & 0.8333 $\pm$ 0.2108 & 0.8667 $\pm$ 0.2211 & 0.6667 $\pm$ 0.2357 & 0.6167 $\pm$ 0.1500 \\
				& MF1  & \textbf{0.8622 $\pm$ 0.2188} & 0.7956 $\pm$ 0.2636 & 0.8222 $\pm$ 0.2341 & \textbf{0.8622 $\pm$ 0.2188} & 0.7622 $\pm$ 0.2968 & 0.8400 $\pm$ 0.2568 & 0.5567 $\pm$ 0.3127 & 0.4267 $\pm$ 0.1937 \\
				\midrule
				
				\multirow{2}{*}{Zoo}
				& Acc. & \textbf{0.9509 $\pm$ 0.0492} & \textbf{0.9509 $\pm$ 0.0492} & \textbf{0.9509 $\pm$ 0.0492} & 0.9500 $\pm$ 0.0500 & 0.9218 $\pm$ 0.0565 & 0.9409 $\pm$ 0.0483 & 0.8909 $\pm$ 0.0702 & 0.9009 $\pm$ 0.0633 \\
				& MF1  & 0.8885 $\pm$ 0.1141 & 0.8714 $\pm$ 0.1325 & \textbf{0.8911 $\pm$ 0.1091} & 0.8651 $\pm$ 0.1382 & 0.8070 $\pm$ 0.1400 & 0.8451 $\pm$ 0.1316 & 0.7447 $\pm$ 0.1601 & 0.7853 $\pm$ 0.1304 \\
				\midrule
				
				\multirow{2}{*}{Iris}
				& Acc. & \textbf{0.9733 $\pm$ 0.0442} & 0.9400 $\pm$ 0.0629 & 0.9533 $\pm$ 0.0521 & 0.9467 $\pm$ 0.0653 & 0.9600 $\pm$ 0.0533 & 0.9667 $\pm$ 0.0447 & 0.9333 $\pm$ 0.0667 & 0.9533 $\pm$ 0.0521 \\
				& MF1  & \textbf{0.9726 $\pm$ 0.0457} & 0.9390 $\pm$ 0.0640 & 0.9520 $\pm$ 0.0540 & 0.9457 $\pm$ 0.0665 & 0.9588 $\pm$ 0.0554 & 0.9665 $\pm$ 0.0449 & 0.9318 $\pm$ 0.0682 & 0.9520 $\pm$ 0.0540 \\
				\midrule
				
				\multirow{2}{*}{Wine}
				& Acc. & 0.9667 $\pm$ 0.0444 & 0.9663 $\pm$ 0.0275 & 0.9608 $\pm$ 0.0435 & 0.8817 $\pm$ 0.0472 & \textbf{0.9889 $\pm$ 0.0333} & 0.9497 $\pm$ 0.0300 & 0.9605 $\pm$ 0.0369 & 0.9663 $\pm$ 0.0371 \\
				& MF1  & 0.9666 $\pm$ 0.0450 & 0.9651 $\pm$ 0.0291 & 0.9619 $\pm$ 0.0436 & 0.8832 $\pm$ 0.0470 & \textbf{0.9886 $\pm$ 0.0343} & 0.9518 $\pm$ 0.0282 & 0.9630 $\pm$ 0.0335 & 0.9677 $\pm$ 0.0363 \\
				\midrule
				
				\multirow{2}{*}{Seeds}
				& Acc. & \textbf{0.9381 $\pm$ 0.0565} & 0.9286 $\pm$ 0.0714 & 0.9238 $\pm$ 0.0680 & 0.9190 $\pm$ 0.0479 & 0.9286 $\pm$ 0.0532 & 0.9238 $\pm$ 0.0610 & 0.9095 $\pm$ 0.0581 & 0.9190 $\pm$ 0.0565 \\
				& MF1  & \textbf{0.9380 $\pm$ 0.0558} & 0.9280 $\pm$ 0.0705 & 0.9238 $\pm$ 0.0670 & 0.9180 $\pm$ 0.0480 & 0.9283 $\pm$ 0.0525 & 0.9238 $\pm$ 0.0604 & 0.9092 $\pm$ 0.0573 & 0.9186 $\pm$ 0.0557 \\
				\midrule
				
				\multirow{2}{*}{Thyroid}
				& Acc. & 0.9483 $\pm$ 0.0396 & \textbf{0.9582 $\pm$ 0.0480} & 0.9305 $\pm$ 0.0427 & 0.9346 $\pm$ 0.0524 & 0.9394 $\pm$ 0.0419 & 0.9210 $\pm$ 0.0362 & 0.8877 $\pm$ 0.0638 & 0.9396 $\pm$ 0.0420 \\
				& MF1  & 0.9190 $\pm$ 0.0597 & \textbf{0.9338 $\pm$ 0.0738} & 0.8972 $\pm$ 0.0637 & 0.9139 $\pm$ 0.0723 & 0.9039 $\pm$ 0.0660 & 0.8709 $\pm$ 0.0659 & 0.8585 $\pm$ 0.0762 & 0.9103 $\pm$ 0.0612 \\
				\midrule
				
				\multirow{2}{*}{Heart}
				& Acc. & 0.8000 $\pm$ 0.0744 & 0.7963 $\pm$ 0.0727 & 0.7963 $\pm$ 0.0799 & 0.7111 $\pm$ 0.0569 & 0.8185 $\pm$ 0.0692 & 0.7963 $\pm$ 0.0799 & 0.7630 $\pm$ 0.0687 & \textbf{0.8259 $\pm$ 0.0621} \\
				& MF1  & 0.7944 $\pm$ 0.0761 & 0.7891 $\pm$ 0.0761 & 0.7891 $\pm$ 0.0858 & 0.7016 $\pm$ 0.0604 & 0.8094 $\pm$ 0.0778 & 0.7908 $\pm$ 0.0807 & 0.7572 $\pm$ 0.0698 & \textbf{0.8219 $\pm$ 0.0637} \\
				\midrule
				
				\multirow{2}{*}{Ionosphere}
				& Acc. & 0.9202 $\pm$ 0.0458 & \textbf{0.9315 $\pm$ 0.0367} & 0.8461 $\pm$ 0.0575 & 0.8889 $\pm$ 0.0432 & 0.9287 $\pm$ 0.0410 & 0.8090 $\pm$ 0.0387 & 0.8575 $\pm$ 0.0529 & 0.6867 $\pm$ 0.0306 \\
				& MF1  & 0.9089 $\pm$ 0.0534 & \textbf{0.9231 $\pm$ 0.0425} & 0.8121 $\pm$ 0.0746 & 0.8783 $\pm$ 0.0481 & 0.9195 $\pm$ 0.0473 & 0.7620 $\pm$ 0.0512 & 0.8323 $\pm$ 0.0655 & 0.5239 $\pm$ 0.0680 \\
				\midrule
				
				\multirow{2}{*}{Libras}
				& Acc. & \textbf{0.8167 $\pm$ 0.0451} & 0.7667 $\pm$ 0.0598 & 0.7444 $\pm$ 0.0911 & 0.7111 $\pm$ 0.0862 & 0.8000 $\pm$ 0.0667 & 0.5944 $\pm$ 0.0683 & 0.4194 $\pm$ 0.0685 & 0.6056 $\pm$ 0.1124 \\
				& MF1  & \textbf{0.8107 $\pm$ 0.0469} & 0.7535 $\pm$ 0.0662 & 0.7257 $\pm$ 0.0925 & 0.6756 $\pm$ 0.0982 & 0.7890 $\pm$ 0.0716 & 0.5540 $\pm$ 0.0729 & 0.3626 $\pm$ 0.0633 & 0.5523 $\pm$ 0.1151 \\
				\midrule
				
				\multirow{2}{*}{Balance}
				& Acc. & 0.9024 $\pm$ 0.0132 & 0.8799 $\pm$ 0.0301 & 0.8304 $\pm$ 0.0346 & 0.7729 $\pm$ 0.0419 & \textbf{0.9040 $\pm$ 0.0141} & 0.8080 $\pm$ 0.0287 & 0.7807 $\pm$ 0.0366 & 0.8976 $\pm$ 0.0166 \\
				& MF1  & 0.6264 $\pm$ 0.0092 & 0.6626 $\pm$ 0.0585 & 0.5939 $\pm$ 0.0180 & 0.5750 $\pm$ 0.0256 & 0.6277 $\pm$ 0.0107 & \textbf{0.7328 $\pm$ 0.0318} & 0.5415 $\pm$ 0.0252 & 0.6233 $\pm$ 0.0123 \\
				\midrule
				
				\multirow{2}{*}{Australian}
				& Acc. & 0.8493 $\pm$ 0.0566 & \textbf{0.8725 $\pm$ 0.0273} & 0.8435 $\pm$ 0.0475 & 0.8261 $\pm$ 0.0410 & 0.8464 $\pm$ 0.0455 & 0.8435 $\pm$ 0.0424 & 0.8522 $\pm$ 0.0557 & 0.8565 $\pm$ 0.0451 \\
				& MF1  & 0.8470 $\pm$ 0.0576 & \textbf{0.8709 $\pm$ 0.0280} & 0.8413 $\pm$ 0.0492 & 0.8236 $\pm$ 0.0415 & 0.8454 $\pm$ 0.0458 & 0.8424 $\pm$ 0.0419 & 0.8509 $\pm$ 0.0566 & 0.8528 $\pm$ 0.0472 \\
				\midrule
				
				\multirow{2}{*}{Banknote}
				& Acc. & 0.9993 $\pm$ 0.0022 & 0.9978 $\pm$ 0.0033 & 0.9985 $\pm$ 0.0029 & 0.9847 $\pm$ 0.0105 & \textbf{1.0000 $\pm$ 0.0000} & 0.9825 $\pm$ 0.0119 & 0.9876 $\pm$ 0.0098 & 0.9993 $\pm$ 0.0022 \\
				& MF1  & 0.9993 $\pm$ 0.0022 & 0.9978 $\pm$ 0.0034 & 0.9985 $\pm$ 0.0030 & 0.9845 $\pm$ 0.0107 & \textbf{1.0000 $\pm$ 0.0000} & 0.9823 $\pm$ 0.0120 & 0.9875 $\pm$ 0.0099 & 0.9993 $\pm$ 0.0022 \\
				\midrule
				
				\multirow{2}{*}{Rice}
				& Acc. & 0.9265 $\pm$ 0.0119 & 0.9152 $\pm$ 0.0138 & 0.9173 $\pm$ 0.0100 & 0.8856 $\pm$ 0.0133 & 0.9273 $\pm$ 0.0097 & 0.9231 $\pm$ 0.0159 & 0.9121 $\pm$ 0.0133 & \textbf{0.9278 $\pm$ 0.0116} \\
				& MF1  & 0.9248 $\pm$ 0.0121 & 0.9133 $\pm$ 0.0142 & 0.9155 $\pm$ 0.0103 & 0.8830 $\pm$ 0.0135 & 0.9257 $\pm$ 0.0098 & 0.9212 $\pm$ 0.0166 & 0.9102 $\pm$ 0.0137 & \textbf{0.9263 $\pm$ 0.0116} \\
				\midrule
				
				\multirow{2}{*}{OptDigits}
				& Acc. & 0.9811 $\pm$ 0.0046 & 0.9802 $\pm$ 0.0045 & 0.9872 $\pm$ 0.0026 & 0.9016 $\pm$ 0.0170 & \textbf{0.9888 $\pm$ 0.0031} & 0.9155 $\pm$ 0.0137 & 0.9164 $\pm$ 0.0104 & 0.9600 $\pm$ 0.0076 \\
				& MF1  & 0.9812 $\pm$ 0.0046 & 0.9803 $\pm$ 0.0045 & 0.9872 $\pm$ 0.0026 & 0.9016 $\pm$ 0.0170 & \textbf{0.9888 $\pm$ 0.0031} & 0.9154 $\pm$ 0.0135 & 0.9128 $\pm$ 0.0117 & 0.9595 $\pm$ 0.0077 \\
				\midrule
				
				\multirow{2}{*}{ISOLET}
				& Acc. & 0.9097 $\pm$ 0.0065 & 0.9483 $\pm$ 0.0092 & 0.8911 $\pm$ 0.0084 & 0.8145 $\pm$ 0.0102 & \textbf{0.9658 $\pm$ 0.0076} & 0.9112 $\pm$ 0.0090 & 0.7348 $\pm$ 0.0147 & 0.4721 $\pm$ 0.0238 \\
				& MF1  & 0.9095 $\pm$ 0.0066 & 0.9482 $\pm$ 0.0092 & 0.8904 $\pm$ 0.0087 & 0.8137 $\pm$ 0.0103 & \textbf{0.9658 $\pm$ 0.0076} & 0.9107 $\pm$ 0.0090 & 0.7277 $\pm$ 0.0164 & 0.3895 $\pm$ 0.0342 \\
				\midrule
				
				\multirow{2}{*}{HTRU2}
				& Acc. & 0.9776 $\pm$ 0.0030 & \textbf{0.9793 $\pm$ 0.0030} & 0.9782 $\pm$ 0.0029 & 0.9668 $\pm$ 0.0039 & 0.9780 $\pm$ 0.0032 & 0.9479 $\pm$ 0.0250 & 0.9744 $\pm$ 0.0036 & 0.9788 $\pm$ 0.0029 \\
				& MF1  & 0.9286 $\pm$ 0.0099 & \textbf{0.9358 $\pm$ 0.0089} & 0.9312 $\pm$ 0.0090 & 0.8999 $\pm$ 0.0125 & 0.9295 $\pm$ 0.0107 & 0.8641 $\pm$ 0.0496 & 0.9184 $\pm$ 0.0112 & 0.9325 $\pm$ 0.0093 \\
				\midrule
				
				\multirow{2}{*}{Shuttle}
				& Acc. & 0.9977 $\pm$ 0.0005 & \textbf{0.9983 $\pm$ 0.0005} & 0.9980 $\pm$ 0.0006 & 0.9981 $\pm$ 0.0004 & 0.9945 $\pm$ 0.0008 & 0.6199 $\pm$ 0.0172 & 0.9922 $\pm$ 0.0022 & 0.9982 $\pm$ 0.0003 \\
				& MF1  & 0.7135 $\pm$ 0.0550 & \textbf{0.7405 $\pm$ 0.0681} & 0.6637 $\pm$ 0.0205 & 0.7140 $\pm$ 0.0639 & 0.4740 $\pm$ 0.0256 & 0.2558 $\pm$ 0.0077 & 0.5756 $\pm$ 0.0695 & 0.6738 $\pm$ 0.0094 \\
				\midrule
				
				\multirow{2}{*}{Skin}
				& Acc. & 0.9995 $\pm$ 0.0001 & \textbf{0.9996 $\pm$ 0.0001} & 0.9995 $\pm$ 0.0001 & 0.9993 $\pm$ 0.0001 & 0.9983 $\pm$ 0.0002 & 0.7811 $\pm$ 0.0100 & 0.9978 $\pm$ 0.0006 & ME \\
				& MF1  & 0.9993 $\pm$ 0.0002 & \textbf{0.9994 $\pm$ 0.0002} & 0.9993 $\pm$ 0.0002 & 0.9990 $\pm$ 0.0002 & 0.9974 $\pm$ 0.0003 & 0.4575 $\pm$ 0.0421 & 0.9967 $\pm$ 0.0009 & ME \\
				\midrule
				
				\multirow{2}{*}{Average}
				& Acc. & \textbf{0.9309} & 0.9255 & 0.9111 & 0.8885 & 0.9291 & 0.8612 & 0.8576 & 0.8531 \\
				& MF1  & \textbf{0.8883} & 0.8859 & 0.8664 & 0.8466 & 0.8678 & 0.7993 & 0.7966 & 0.7773 \\
				\bottomrule
		\end{tabular}}
		\begin{minipage}{\textwidth}
			\footnotesize
			\vspace{0.2cm}
			\textbf{Note:} Acc. and MF1 denote Accuracy and Macro-F1, respectively. ME denotes memory error. Boldface indicates the best result for each dataset and metric. SGBSkNN is excluded from the average-value calculation on Skin Segmentation, while it is assigned the lowest rank on this dataset in rank-based analysis.
		\end{minipage}
	\end{threeparttable}
\end{table*}

\begin{table}[htbp]
	\centering
	\caption{Average ranks over the selected 18 datasets. A smaller rank indicates better performance.}
	\label{tab:avg_rank_classification}
	\begin{threeparttable}
		\scriptsize
		\resizebox{0.7\columnwidth}{!}{
			\begin{tabular}{lccc}
				\toprule
				Method & Acc. & MF1 & Avg. Rank \\
				\midrule
				MDL-GBC & \textbf{2.6944} & \textbf{2.6667} & \textbf{2.6806} \\
				XGBoost & 3.0000 & 3.1111 & 3.0556 \\
				SVM & 3.0278 & 3.2222 & 3.1250 \\
				KNN & 4.3056 & 4.4722 & 4.3889 \\
				SGBSkNN\tnote{a} & 4.5278 & 4.7500 & 4.6389 \\
				GBTSVM & 5.8333 & 5.3889 & 5.6111 \\
				CART & 5.9444 & 5.7222 & 5.8333 \\
				ScOrGBC & 6.6667 & 6.6667 & 6.6667 \\
				\bottomrule
		\end{tabular}}
	\begin{minipage}{\columnwidth}
		\footnotesize
		\vspace{0.3em}
		\tnote{a} SGBSkNN encounters a memory error on Skin Segmentation. Therefore, it is assigned the lowest rank on this dataset when computing the average ranks.
	\end{minipage}
	\end{threeparttable}
\end{table}

Table~\ref{tab:selected_18_classification_comparison} reports the dataset-level classification results, and Table~\ref{tab:avg_rank_classification} summarizes the corresponding average ranks. Overall, MDL-GBC achieves the best average Accuracy and Macro-F1 among all compared methods, with values of 0.9309 and 0.8883, respectively. It also obtains the best overall average rank, indicating that its advantage is not limited to a few individual datasets. Since SGBSkNN encounters a memory error on Skin Segmentation, its average metric values are computed over the executable datasets only, while it is assigned the lowest rank on this dataset in the rank-based comparison.

At the dataset level, MDL-GBC shows favorable performance on several small and medium-scale datasets, such as Lenses, Iris, Seeds, and Libras. In particular, on Libras, which is a high-dimensional multi-class dataset, MDL-GBC achieves the best results on both Accuracy and Macro-F1. This suggests that the proposed boundary-aware granular-ball representation can preserve useful class-discriminative structures. On large-scale datasets such as HTRU2, Shuttle, and Skin Segmentation, MDL-GBC remains competitive, although strong classical baselines such as XGBoost and SVM achieve the best results on some individual metrics.

Compared with classical baselines, MDL-GBC achieves higher average Accuracy and Macro-F1 than KNN and CART, and it obtains a slightly higher average Accuracy and a clearer Macro-F1 advantage over SVM. XGBoost remains a strong competitor and performs well on datasets such as Thyroid, Ionosphere, HTRU2, Shuttle, and Skin Segmentation. Nevertheless, MDL-GBC achieves the best overall averages and ranks, showing a more stable balance between overall accuracy and macro-level class-wise performance.

Compared with granular-ball-related baselines, MDL-GBC also shows more consistent performance. GBTSVM, ScOrGBC, and SGBSkNN can perform well on some datasets, but their average results and ranks are lower than those of MDL-GBC. This indicates that the improvement of MDL-GBC does not come merely from using granular balls, but from the proposed boundary-aware generation mechanism and the MDL-guided coding decision rule.

It is worth noting that no method dominates all datasets. Different classifiers have different inductive biases and may be more suitable for different data distributions. Even so, the overall results demonstrate that MDL-GBC provides a competitive and stable classification framework across datasets with different sample sizes, feature dimensions, class numbers, and imbalance characteristics.

\subsection{Runtime Analysis}
\label{subsec:runtime_analysis}

Since MDL-GBC explicitly constructs class-aware granular-ball representations and performs local MDL model competition, its runtime behavior is analyzed separately. The selected 18 datasets are divided into three sample-scale groups: $n\leq1000$, $1000<n\leq10000$, and $n>10000$. Table~\ref{tab:runtime_grouped_classification} reports the grouped average runtime of all compared methods, and Fig.~\ref{fig:mdl_gbc_dimension_runtime} illustrates the influence of feature dimensionality on the runtime of MDL-GBC.

\begin{table*}[htbp]
	\centering
	\caption{Grouped average runtime (seconds) across different sample-scale regimes on the selected 18 datasets.}
	\label{tab:runtime_grouped_classification}
	\begin{threeparttable}
		\scriptsize
		\resizebox{\textwidth}{!}{
			\begin{tabular}{lcccccccc}
				\toprule
				Sample scale 
				& MDL-GBC 
				& XGBoost 
				& KNN 
				& CART 
				& SVM 
				& GBTSVM 
				& ScOrGBC 
				& SGBSkNN \\
				\midrule
				$\leq 1000$ 
				& 1.6902 
				& 0.0683 
				& 0.4298 
				& \textbf{0.0020} 
				& 0.0042 
				& 0.4963 
				& 0.1863 
				& 0.7187 \\
				
				$1000 < n \leq 10000$ 
				& 31.6191 
				& 13.6600 
				& \textbf{0.0259} 
				& 1.0623 
				& 0.5613 
				& 1.3286 
				& 40.0234 
				& 127.6938 \\
				
				$>10000$ 
				& 2546.0890 
				& 0.4335 
				& 0.2548 
				& \textbf{0.1219} 
				& 16.5419 
				& 1.0222 
				& 318.2045 
				& 7156.0405\tnote{a} \\
				
				\bottomrule
		\end{tabular}}
		\begin{minipage}{\textwidth}
			\footnotesize
			\vspace{0.3em}
			\tnote{a} SGBSkNN encounters a memory error on Skin Segmentation. Therefore, its runtime average in the large-scale group is computed over the executable large-scale datasets only.
		\end{minipage}
	\end{threeparttable}
\end{table*}

\begin{figure*}[!t]
	\centering
	\includegraphics[width=\textwidth]{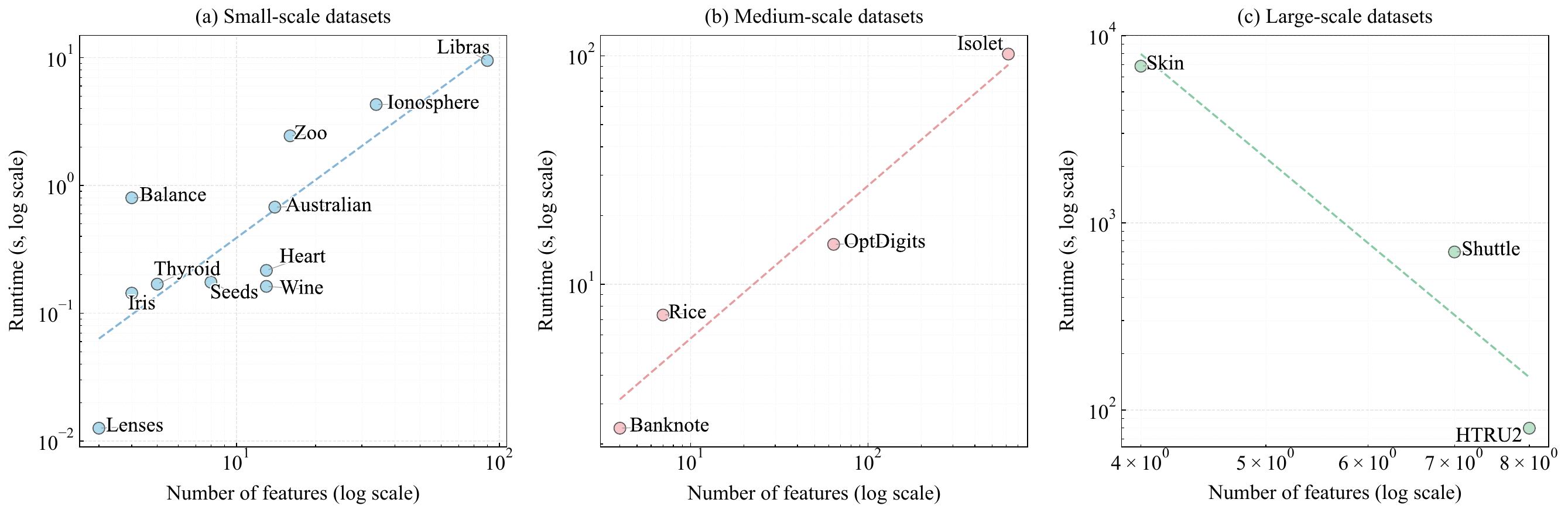}
	\caption{Effect of feature dimensionality on the runtime of MDL-GBC under different sample-scale regimes.}
	\label{fig:mdl_gbc_dimension_runtime}
\end{figure*}

As shown in Table~\ref{tab:runtime_grouped_classification}, the runtime of MDL-GBC increases with the sample scale. Its average runtime rises from 1.6902 seconds on small-scale datasets to 31.6191 seconds on medium-scale datasets, and further to 2546.0890 seconds on large-scale datasets. This trend is expected because recursive granular-ball generation, nearest-negative evidence evaluation, and class-level coding-cost computation become more expensive as the number of training samples increases.

Compared with classical baselines, MDL-GBC is not the fastest method. CART and KNN are much faster in most sample-scale groups, and XGBoost also remains computationally efficient due to its optimized implementation. By contrast, MDL-GBC introduces additional cost to obtain a non-parametric, boundary-aware, and interpretable granular-ball representation. Among granular-ball-related baselines, GBTSVM is relatively efficient, whereas ScOrGBC and especially SGBSkNN become more time-consuming as data scale increases. SGBSkNN also encounters a memory error on Skin Segmentation, indicating limited scalability in this setting.

Figure~\ref{fig:mdl_gbc_dimension_runtime} further shows that runtime is affected not only by sample size, but also by feature dimensionality. For example, Libras and ISOLET require more computation than lower-dimensional datasets with comparable sample sizes, mainly because distance computation, local statistic estimation, and coding-cost evaluation all depend on the feature dimension. In the large-scale group, however, sample size becomes the dominant factor: Skin Segmentation has only four features, but its extremely large sample size leads to the highest runtime of MDL-GBC.

Overall, MDL-GBC trades part of computational efficiency for adaptive granularity selection, boundary-aware representation, and interpretability. Future work will focus on improving efficiency through approximate nearest-neighbor search, compact ball indexing, parallel MDL evaluation, and faster prediction strategies.

\section{Discussion and Conclusions}
\label{sec:discussion_conclusion}

This paper proposed MDL-GBC, a boundary-aware non-parametric and interpretable granular-ball classifier based on the Minimum Description Length principle. The central idea is to regard class-conditional granular-ball construction as a local model selection problem guided by boundary evidence. For each class-conditional granular ball, MDL-GBC compares three candidate explanations under a unified coding criterion: the single-ball model, the two-ball model, and the core-boundary model. In this way, ball retention, geometric splitting, and boundary-sensitive refinement are determined by description-length comparison rather than by manually specified purity thresholds or fixed granularity parameters.

This formulation provides a unified way to connect class-conditional representation learning with classification-oriented boundary modeling. For each target class, positive samples describe the class evidence, while samples from the remaining classes provide negative boundary evidence. As a result, the learned granular balls can capture not only within-class compactness, but also cross-class boundary exposure. The prediction stage follows the same coding perspective: stable balls of the same class are aggregated through a class-level mixture coding rule, so the final decision is made by comparing class-wise coding costs instead of relying on a single nearest ball or a purely local voting rule.

The experimental results on 18 benchmark datasets show that MDL-GBC achieves competitive and stable classification performance. It obtains the best average Accuracy, Macro-F1, and overall average rank among the compared methods. At the dataset level, MDL-GBC performs favorably on small and medium-scale datasets such as Lenses, Iris, Seeds, and Libras, and remains competitive on large-scale datasets such as HTRU2, Shuttle, and Skin Segmentation. These results suggest that the proposed MDL-based construction and prediction mechanism can provide a useful balance between overall accuracy and macro-level class-wise performance.

Compared with classical baselines, MDL-GBC offers an interpretable region-level representation. While KNN, CART, SVM, and XGBoost rely on sample-level neighborhoods, tree partitions, margin-based boundaries, or boosted ensembles, MDL-GBC represents each class by a set of stable granular balls selected through local MDL competition. Compared with existing granular-ball-related methods, its main advantage lies in the consistency between representation construction and prediction. Local structural refinement is governed by description-length comparison, and final classification is also formulated through coding-cost comparison. This consistency makes the learned representation more transparent and better aligned with the classification objective.

The runtime analysis shows that MDL-GBC introduces additional computational cost. This cost mainly comes from recursive granular-ball construction, nearest-negative evidence evaluation, and local MDL model competition. Sample size is the primary factor driving runtime growth, while feature dimensionality further increases the cost of distance computation, local statistic estimation, and coding-cost evaluation. Therefore, MDL-GBC should not be viewed as the fastest classifier, but rather as a method that trades part of computational efficiency for non-parametric granularity selection, boundary-aware representation, and interpretability.

Several limitations also remain. First, distance-based local modeling and nearest-negative evidence evaluation can become expensive on very large datasets. Second, the diagonal Gaussian coding model provides a simple and efficient approximation, but may not fully capture highly correlated feature spaces. Third, the current method mainly relies on Euclidean geometry after Min--Max normalization, which may be insufficient for mixed-type, structured, or domain-specific data. These limitations indicate that further improvements are needed in both computational efficiency and modeling flexibility.

Future work will therefore focus on accelerating MDL-GBC and extending its modeling capacity. Possible directions include approximate nearest-neighbor search, ball indexing, parallel local MDL evaluation, and more expressive coding models such as full-covariance, low-rank, kernelized, or distribution-adaptive coding terms. Incremental and online extensions are also worth exploring, so that stable granular balls can be updated when new samples arrive rather than regenerated from scratch.

In conclusion, MDL-GBC provides a boundary-aware, non-parametric, and interpretable granular-ball classifier based on the Minimum Description Length principle. By unifying local construction decisions through MDL-based model competition and aligning prediction with class-level mixture coding, the proposed method reduces dependence on heuristic structural thresholds and gives the learned granular-ball representation a clear coding-theoretic explanation. The experimental results support the effectiveness of MDL-guided boundary-aware granular-ball learning for classification.

\section*{CRediT authorship contribution statement}
\textbf{Zeqiang Xian}: Conceptualization, Methodology, Writing – original draft, Software, Validation. \textbf{Caihui Liu}: Conceptualization, Methodology, Writing – review \& editing, Validation, Supervision. \textbf{Yong Zhang}: Software, Data curation. 
\textbf{Wenjing Qiu}: Software, Data curation. 
\textbf{Duoqian Miao}: Writing – review \& editing
\textbf{Witold Pedrycz}: Writing – review \& editing

\section*{Declaration of Competing Interest}
The authors declare that they have no known competing financial interests or personal relationships that could have appeared to influence the work reported in this paper.

\section*{Data availability}
Data will be made available on request.

\section*{Acknowledgment}
The research is supported by the National Natural Science Foundation of China under Grant No. 62566003, Graduate Innovation Funding Program of Jiangxi Province under Grant No. YC2025-S224.

\bibliographystyle{IEEEtran}
\bibliography{References}

\end{document}